\author{Evan M. Drumwright}
\institute{Toyota Research Institute}
\begin{document}

\title{True Rigidity: Interpenetration-free Multi-Body Simulation with Polytopic Contact}

\maketitle
\begin{abstract}
An effective paradigm for simulating the dynamics of robots that locomote and manipulate is multi-rigid body simulation with rigid contact. This paradigm provides reasonable tradeoffs between accuracy, running time, and simplicity of parameter selection and identification. The Stewart-Trinkle/Anitescu-Potra ``time stepping'' approach is the basis
of many existing implementations. It successfully treats inconsistent (Painlev\'{e}-type) contact configurations, efficiently handles many contact events occurring in short time intervals, and provably converges to the solution of the continuous time differential algebraic equations (DAEs) as the integration step size tends to zero. However, there is currently no means to determine when the solution has largely converged, i.e., when smaller integration steps would result in only small increases in accuracy.  

The present work describes an approach that computes the event times (when the set of active equations in a DAE changes) of all contact/impact events for a multi-body simulation, toward using integration techniques with error control to compute a solution with desired accuracy. 

We also describe a first-order, variable integration approach that ensures that rigid bodies with convex polytopic geometries never interpenetrate. This approach permits taking large steps when possible and takes small steps when contact is complex. 
\end{abstract}

\section{Introduction}
Verifying correctness for contact-free multi-body simulations is straightforward. As examples, one can verify that energy remains constant on a conservative system, can compare state against the closed form solution for a classical system, e.g., a pendulum, or can compare state against trusted numerical solutions (produced using simulation software that has been carefully assessed for correctness).  Verifying correctness for multi-body simulations with contact is much harder. There are few benchmark problems that possess known solutions for multi-body dynamics simulations with rigid contact, and we are aware of no such benchmarks for bodies with polytopic geometries.

Meanwhile, researchers in the robotics, multi-body dynamics, nonsmooth mechanics, and computer graphics communities continually propose approaches for increasing simulation speed. When researchers have assessed correctness of an approach, they have done so by searching for qualitative behavior, like rattleback toy spin reversal~\cite{Mirtich:1996vt} and pattern generation in vibrated bins of granules~\cite{Smith:2012}. We aim to instead find accurate numerical solutions, thereby motivating the present approach. Approaches for simulating multi-rigid body dynamics with contact would then have to produce \emph{the} result as quickly as possible, rather than \emph{a}, perhaps plausible, result (the status quo). 

While we envision the present work as leading directly to such an oracle that can produce the correct result, the current approach is usable for typical multi-body dynamics simulation applications as well: the approach integrates the equations of motion directly to changes in the contact manifold. Since all such changes are detected and interpenetration is prevented, gross artifacts---like objects that are ejected from robots grasps---that frequently appear in robotic simulations should be eliminated. We focus on polytopic geometries, which are not only a popular representation but also allow using the linearized halfspace constraints supported by linear complementarity problem (LCP) formulations without approximation.


\section{Related work}

This paper builds upon work from  research in nonsmooth mechanics, computer science, and mathematics.

\subsection{DAEs and DVIs for modeling multi-body dynamics with contact}
\label{section:related:DVI}

Multibody dynamics with rigid contact and Coulomb friction---which captures important stick-slip transitions---can be modeled as a differential algebraic equation (DAE):
\begin{align}
\ddot{\vect{q}} & = \vect{f}(\vect{q}, \dot{\vect{q}}, \vect{u}) \\
\vect{0} & = \vect{\phi}(\vect{q})  \label{eqn:DAE2}
\end{align}
where $\vect{\phi}(.)$ is a set of \emph{active} algebraic constraints, out of $m$ total constraints. Some constraints are always active, like bilateral joint constraints. Other constraints are only active if certain conditions are met; e.g., a contact constraint between two polyhedra would only be active when the bodies are in contact at that point and they would otherwise (i.e., without the constraint in place) interpenetrate at that point:
\begin{align}
\dot{\phi}_i(\vect{q}, \dot{\vect{q}}) = 0 \textrm{ if } \phi_i(\vect{q}) = 0 \textrm{ and } \lambda_i > 0, \textrm{ for } i=1,\ldots,m \label{eqn:DAE1}
\end{align} 
where $\lambda_i$ acts as a Lagrange Multiplier (i.e., it is zero if the constraint is active and non-negative otherwise). A standard \emph{index reduction} technique seeks to remove all algebraic variables from a DAE, yielding an ODE system explicit for all unknowns.

We aim to apply error estimation for general ODEs to DAEs for multi-rigid body dynamics with contact.  This requires us to identify \emph{all} such points in time where any constraint becomes active or inactive. Root finding, \software{MATLAB}'s technique for handling such problems, is prone to missing such times as rigid bodies can make and break contact rapidly; if two bodies are not in contact at the endpoints of $[a, b]$ but are in contact at $\frac{1}{2}(a+b)$, root finding will fail to find such \emph{events}. Between any two consecutive events, ODE-based techniques can be applied for estimating the error.

We note that piecewise DAEs with unilateral constraints are generally modeled as differential variational inequalities (DVI)~\cite{Pang:2008}, which---for rigid body contact problems, at least---we find more challenging to pose, though existing solution approaches can work reasonably well. These approaches will be discussed in Section~\ref{section:related-work:time-stepping}.

\begin{figure}[htpb]
\centering
\includegraphics[width=.85\linewidth]{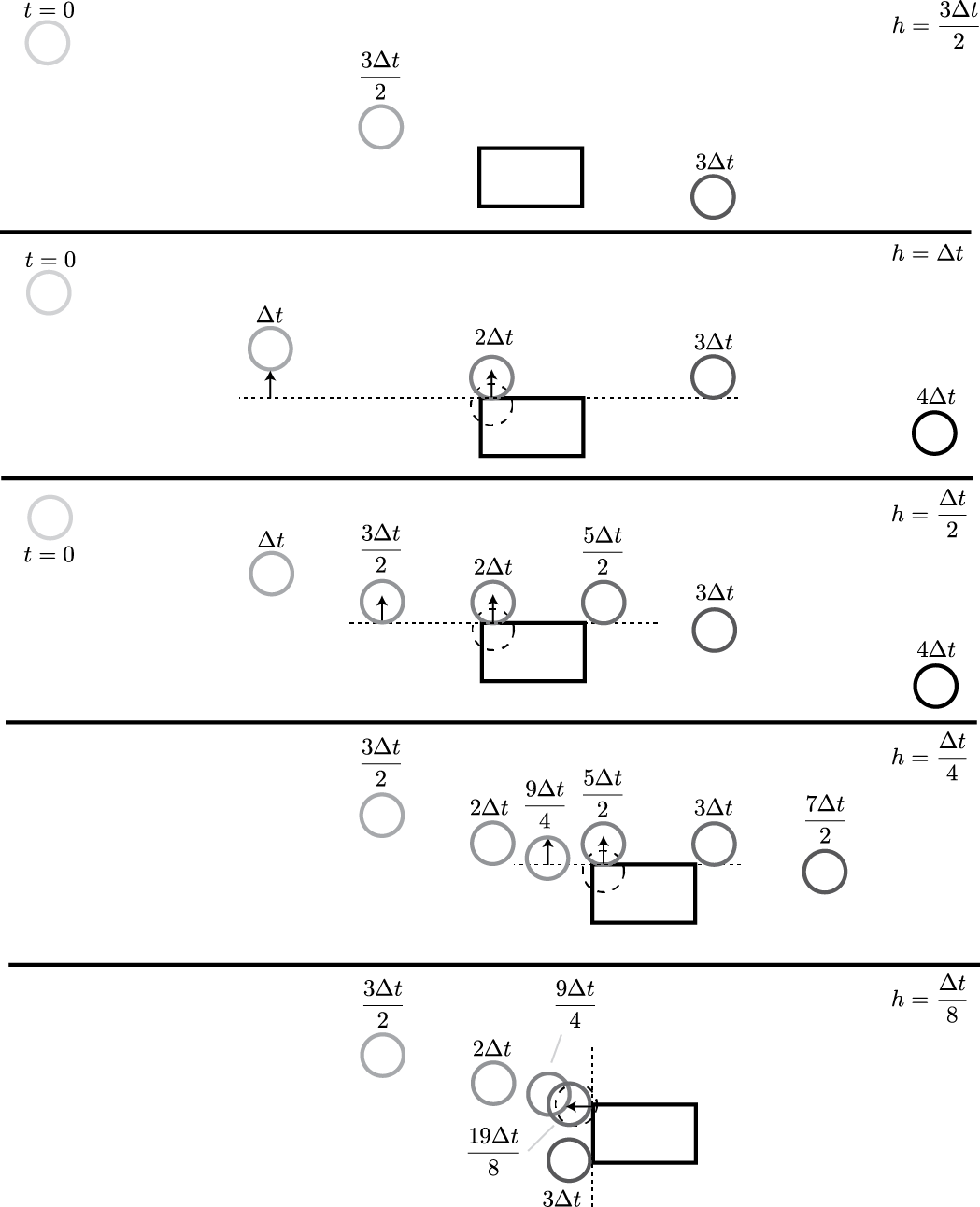}
\caption{Depiction of Stewart-Trinkle for simulating a ball moving ballistically toward a stationary box, with different fixed step sizes (top to bottom).  Only some ball positions are depicted. Contact points lie at the source of arrows; arrows are drawn each time a contact constrains  motion; feasible half spaces are drawn as a line segment with dashed stroke. Contact constraints are determined as described in~\cite{Stewart:1996,Stewart:2000}: when bodies are found to be intersecting at $t+h$ (circle drawn in dashed stroke), the simulation is regressed to $t$ and a contact constraint is added. The motion does not qualitatively match the adaptive result (see Figure~\ref{fig:adaptive-approach}) until the step size is $\Delta t/8$.\vspace{-.25in}
}
\label{fig:ST-approach}
\end{figure}

\subsection{Event-driven approaches to simulation}
Early works in simulating rigid bodies undergoing rigid contact were conducted
by L\"ostedt~\cite{Lostedt:1982,Lostedt:1984} and Baraff~\cite{Baraff:1989sm,Baraff:1991,Baraff:1994}, and used root finding approaches to locate events. These works described some challenges with theory---inconsistent configurations, exemplified by the Painlev\'{e} Paradox~\cite{Painleve:1895} that was well known to the theoretical mechanics community---and computational complexity: for example, Baraff showed that the problem of classifying a contact configuration as inconsistent is NP-hard~\cite{Baraff:1994}. These works do not consider the challenges of locating all possible events or the failure to do so.

\subsection{Time-stepping approaches to simulation with Coulomb friction}
\label{section:related-work:time-stepping}
Stewart and Trinkle~\cite{Stewart:1996} and Anitescu and Potra~\cite{Anitescu:1997} described an approach (based on Moreau's ``time stepping''~\cite{Moreau:1985} discretization of the rigid body dynamics and differential inclusion theory) that provably mitigated the
problem of inconsistent configurations. These authors proved that the complementarity problem~\cite{Cottle:1992} based approaches used in these works do non-positive work and always possess a solution if the contact constraints are linearly independent. Shortly after introducing this method, Stewart proved convergence of the approach to the solution to the continuous time dynamics~\cite{Stewart:1998} as the integration step tends to zero. Stewart-Trinkle~\cite{Stewart:2000} and Anitescu-Potra~\cite{Anitescu:2004} later described approaches that, respectively, prevent and minimize interpenetration; Figures~\ref{fig:ST-approach}~and~\ref{fig:AP-approach} illustrate the similarities and differences. Proofs of convergence and constraint stabilization rely upon assumptions that the complementarity problem has unique solutions~\cite{Stewart:2000} and the \emph{signed distance} function~\cite{Anitescu:2004} is differentiable.

A significant issue with time stepping approaches comes from determining constraints
that hold over the length of time intervals: inconsistent constraints can cause the complementarity problem to fail to have a solution, and movement between bodies may be unphysically constrained.

\subsection{Conservative advancement}
Mirtich pioneered \emph{conservative advancement} (CA), which uses the distance between bodies and bodies' velocities and accelerations, to integrate rigid bodies and multi-rigid body systems without missing contact events (a guarantee root finding approaches cannot make~\cite{Mirtich:1996vt}).  Assuming the typical first order integration process for multibody dynamics with contact, CA (depicted in Figure~\ref{fig:CA}) can use the formula below, where variables $\dot{\vect{x}}$ represent linear velocities, $\vect{\omega}$ represent angular velocities, $r$ is the minimum bounding sphere for the body, and $\hat{\vect{d}} \equiv \vect{d}/||\vect{d}||$:
\begin{align}
\Delta t^{\textrm{max}} \equiv 
\frac{||\vect{d}||}{|\tr{\hat{\vect{d}}}(\dot{\vect{x}}_A - \dot{\vect{x}}_B)| + ||\vect{\omega}_A \times \hat{\vect{d}}||\, r_A + ||\vect{\omega}_B \times \hat{\vect{d}}||\, r_B} \label{eqn:CA}
\end{align}
\begin{figure}[htpb]
\centering
\includegraphics[width=.8\linewidth]{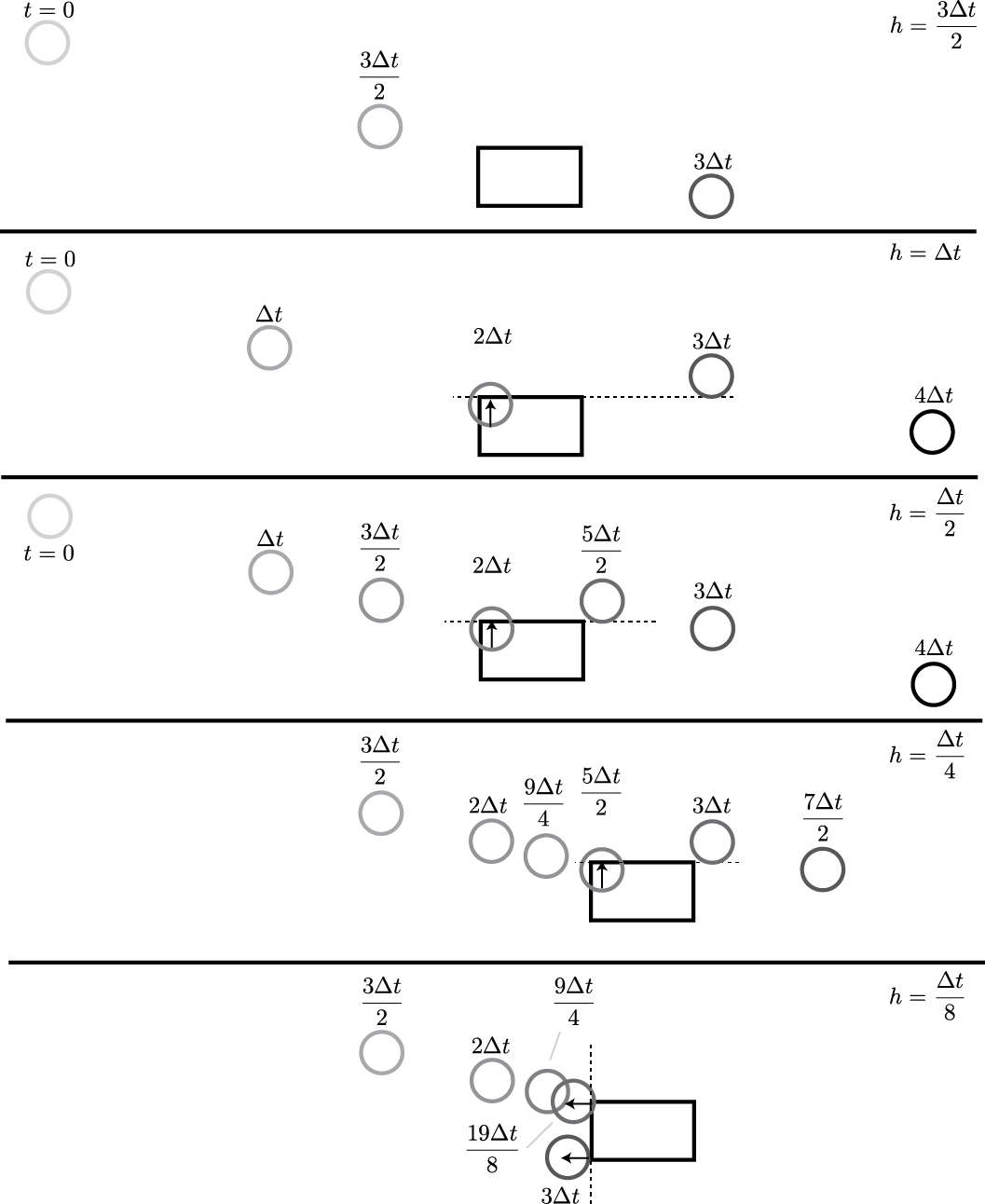}
\caption{Rotated depiction of Anitescu-Potra for simulating a ball moving ballistically toward a stationary box, with different fixed step sizes (top to bottom).  Only some  ball positions are depicted. Contact points lie at the source of arrows; arrows are drawn each time a  contact constrains motion; feasible half spaces are drawn as a line segment with dashed stroke. Contact constraints are determined as described in~\cite{Anitescu:2004} and as implemented in \software{ODE}: contact constraints added at time $t$ only when bodies' geometries are intersecting at $t$. Interpenetration is corrected at time $t+h$, assuming immediate constraint stabilization and that contact constraints are identical at times $t$ and $t+h$. The motion does not qualitatively match the adaptive result (see Figure~\ref{fig:adaptive-approach}) until the step size is $\Delta t/8$.\vspace{-.125in} 
}
\label{fig:AP-approach}
\end{figure}

Proofs of correctness of conservative advancement (i.e., proof that it generates a lower bound on the time of impact) are provided in~\cite{Mirtich:1996vt}. The formula indicates that angular velocities parallel to $\hat{\vect{d}}$---the normal to the contact manifold for ``kissing'' bodies---do not decrease $\Delta t^{\textrm{max}}$, the conservative bound. When bodies are contacting, the safe bound is zero, so a special strategy is necessary. Mirtich kept bodies undergoing sustained contact separated, using a ``microcollision'' approach, which not only results in poor computation rate on some scenarios (like stacking), but also introduces error that does not disappear as the integration step tends to zero. The approach described in the present work instead leverages knowledge that the geometries are convex polyhedra to address this problem. Note that non-convex polytope geometries may be decomposed into unions of convex polytopes (see, e.g.,~\cite{Lien:2008}).

\subsubsection{Convergence rate of conservative advancement}
If bodies initially separated by distance $d$ are bound by conservative advancement as moving toward one another at speed $v$ but are really moving toward one another at true approach speed of $\alpha v$,  for $0 < \alpha < 1$, then the following sequence of integration steps will be produced.
\begin{equation}
d/v, ((d - (d/v)\cdot \alpha v)/v), (d - ((d/v) \cdot \alpha v + ((d - (d/v)\cdot \alpha v)/v) \cdot \alpha v))/v, ...
\end{equation}
This series is geometric. The decreasing distance is also a geometric series: it is the series obtained by multiplying each of the terms above by $v$. For rigid bodies, this series' convergence rate is dependent upon the proportion of linear velocity directed toward the contact surface and the magnitude of angular velocity orthogonal to the contact surface. Distances rapidly converge toward zero for bodies coming into contact, and integration steps increase rapidly for bodies moving apart due to the geometric nature of the series. As Figure~\ref{fig:hist} shows, even fast moving multi-bodies tend to yield large steps.

\begin{SCfigure}
\centering
\includegraphics[width=.7\linewidth]{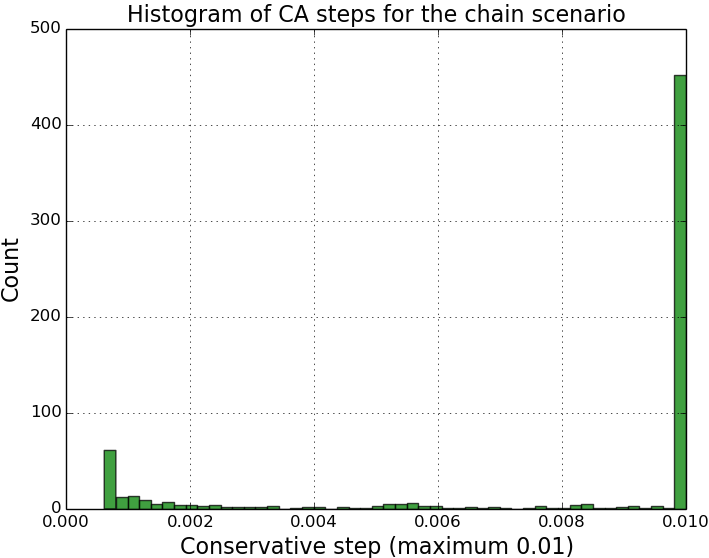}
\caption{The histogram of integration steps for the falling chain scenario (see Section~\ref{section:experiment:chain}) indicates that the maximum integration step is selected the majority of the time.}
\label{fig:hist}
\end{SCfigure}

\section{Computing the contact event times}
Conservative advancement permits finding the first time that two bodies come into contact. Examination of Equation~\ref{eqn:CA} shows that when bodies are touching ($||\vect{d}||=0$), the conservative advancement formula  is unable to provide a next ``safe'' step. This section addresses this problem of finding a next safe step, thereby permitting locating the next time that the contact manifold changes. Computing the times that the contact manifold (seen in Figure~\ref{fig:contact-manifold}) changes relies upon Minkowski's Hyperplane Separation Theorem.

\subsubsection{Hyperplane Separation Theorem}
Let $A$ and $B$ be two disjoint nonempty convex subsets of $\mathbb{R}^n$. Then there exists a non-zero vector $\hat{\vect{n}}$ and a real number $\sigma$ such that:
\begin{align}
\tr{\vect{x}}\hat{\vect{n}} \le \sigma \le \tr{\vect{y}}\hat{\vect{n}}
\end{align}
for all $\vect{x}$ in $A$ and $\vect{y}$ in $B$. The hyperplane normal to $\hat{\vect{n}}$ and defined using $\sigma$ separates $A$ and $B$~\cite{Boyd:2004}. \software{V-Clip} can be used, albeit indirectly, to compute a separating plane in 3D in time $O(m+n)$ in the features of the two polyhedra. 

\begin{SCfigure}
\centering
\includegraphics[width=.4\linewidth]{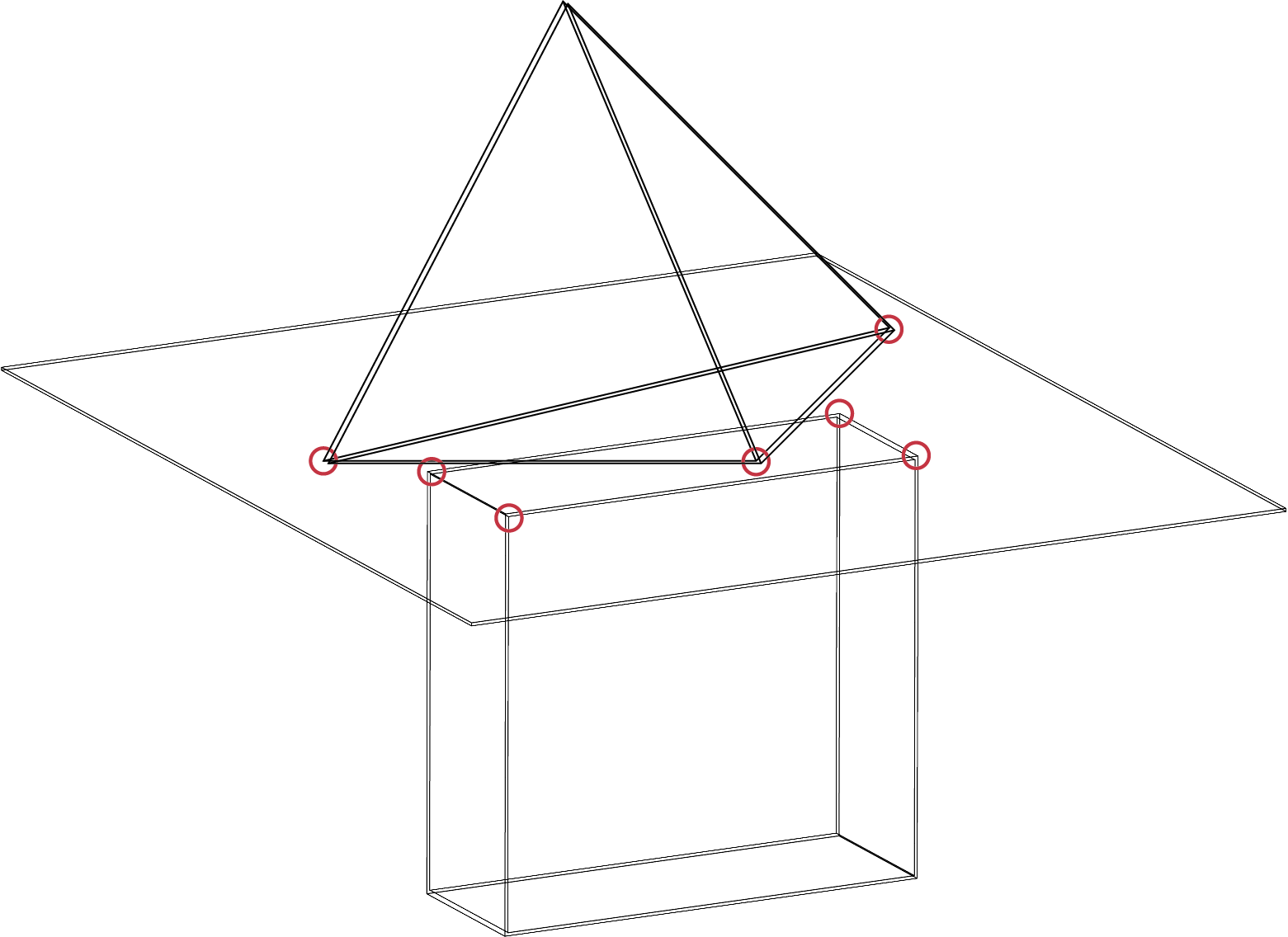}
\caption{The separating plane between two convex polyhedra, a pyramid and a box. Points of contact between the polyhedra and the separating plane are highlighted with red circles.}
\label{fig:contact-manifold}
\end{SCfigure}

\subsubsection{Unique separating hyperplanes}
The separating plane is uniquely defined when two convex polyhedra touch in one of the following ways: at a vertex and the interior of a face, on an edge and a face (with the intersection yielding a line segment), or at two coplanar faces. When two polyhedra touch at two vertices, or at a vertex and the interior of an edge, or at two edges (with the intersection yielding a line segment), the separating plane is not uniquely defined. The latter three cases, which are fortunately rare, are not treated in the present work to avoid overly constraining the motion; these cases can at least be detected and reported, however. This issue is discussed at depth in~\cite{Williams:2014}.

\subsection{Bounding the time that features can newly contact the separating plane}

Assuming that all points touching the separating plane remain on the separating plane, the earliest time that a given vertex $\vect{r}$ of a polyhedron $A$ \emph{not} already touching the separating plane can contact the separating plane can be bound by dividing the projected distance from the vertex to the plane by the motion bound for the two bodies:
\begin{align}
\Delta t^{\textrm{max}^{\vect{r}}} \equiv \frac{|\tr{\hat{\vect{n}}}\vect{r} - \sigma|}{(||\vect{\omega}_A \times \hat{\vect{n}}|| + ||\vect{\omega}_B \times \hat{\vect{n}}||) ||\vect{r} - \vect{\xi}||} \label{eqn:constrained-CA}
\end{align}  
where $\vect{\xi}$ is defined as the point furthest from $\vect{r}$ on the intersection of $A$ and the separating plane   (see Figure~\ref{fig:next-contact}). Ignoring vertices on the contact manifold also avoids undefined values where the equation could evaluate to $0/0$. The bound for when a new vertex, either from $A$ or from $B$, can come into contact with the separating plane is given by:
\begin{align}
\Delta t^* = \min_{\vect{s}} \Delta t^{\textrm{max}^{\vect{s}}},\ \forall \vect{s} \in \textrm{vertices}(A) \cup \textrm{vertices}(B) \textrm{ s. t. } |\tr{\hat{\vect{n}}}\vect{r} - \sigma| > 0
\end{align}
Practically, one only need test the vertices adjacent to the vertices touching the separating plane---by convexity, these adjacent vertices must be at least as close as all other vertices not on the separating plane---but this fact does not alter the worst-case time complexity of this operation (linear in the number of features of the two polyhedra).

\begin{SCfigure}
\centering
\includegraphics[width=.5\linewidth]{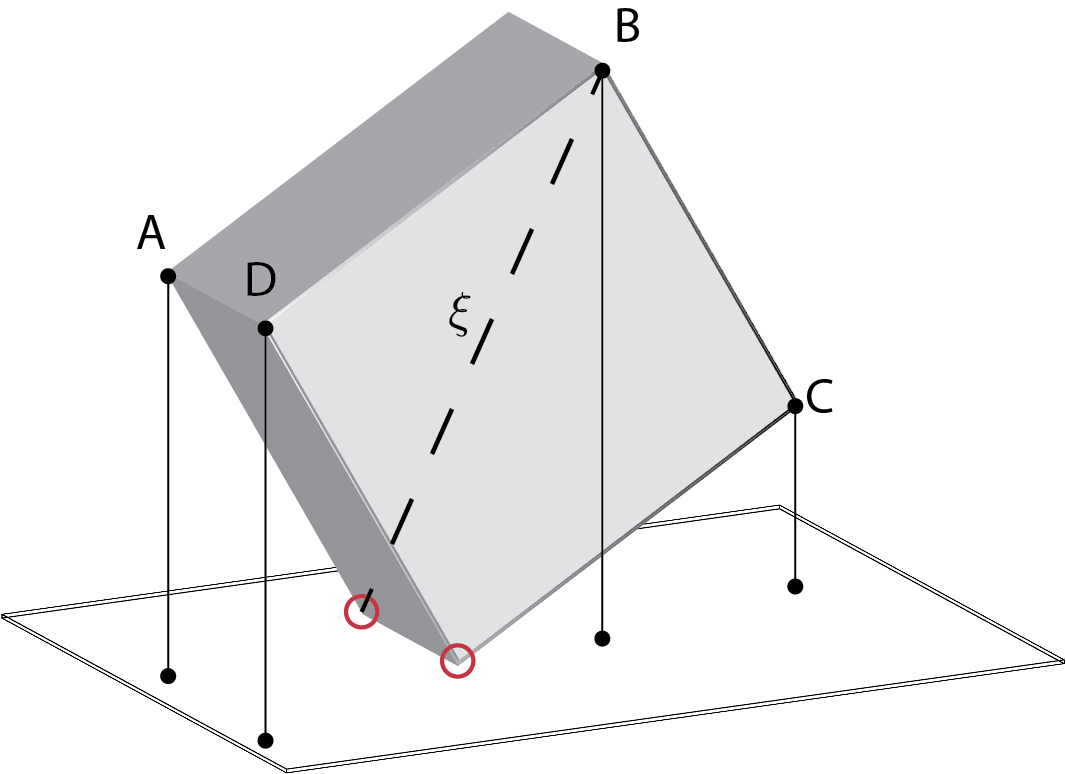}
\caption{Depiction of points from a polyhedron projected onto the separating plane. Dividing the minimum distance from these points to the separating plane by the relative motion that the two bodies can make in unit time yields a safe integration step if the projected \emph{relative} velocities at all contact points (circled in red) are zero at the time pictured.}
\label{fig:next-contact}
\end{SCfigure}

Finally, if should be clear that if the relative velocities between the two polyhedra measured at three non-collinear points on the contact manifold and projected onto the separating plane remain zero throughout [$a,b]$ then any integration step is safe. However, we must conduct the operation described previously as part of the process necessary to ensure that these projected velocities do indeed remain zero. 

\begin{theorem}
If over interval $[a, b]$, no vertex from either polyhedron passes through the separating plane defined at $a$ and moving according to the relative velocities of the two polyhedra over $[a, b]$, the two bodies cannot interpenetrate in $[a,b]$.
\end{theorem}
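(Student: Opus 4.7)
The plan is to exploit convexity together with a continuity/sign-change argument applied to the signed distance from each vertex to the moving separating plane. The key underlying fact, which I would state up front, is that a convex polyhedron lies entirely in a closed halfspace of a plane \emph{if and only if} every one of its vertices does, because the polyhedron is the convex hull of its vertices and halfspaces are convex. Thus interpenetration between $A$ and $B$ is ruled out as soon as we can exhibit a plane that simultaneously keeps all vertices of $A$ in one closed halfspace and all vertices of $B$ in the other.

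First I would invoke the Hyperplane Separation Theorem (already stated in the excerpt) at the initial instant $t=a$: since $A(a)$ and $B(a)$ are disjoint (or only touching) convex polyhedra, there exist $\hat{\vect{n}}(a)$ and $\sigma(a)$ so that $\tr{\vect{x}}\hat{\vect{n}}(a)\le\sigma(a)\le\tr{\vect{y}}\hat{\vect{n}}(a)$ for every vertex $\vect{x}$ of $A$ and every vertex $\vect{y}$ of $B$. Next I would formalize what it means for the plane to ``move according to the relative velocities'': I would parameterize $\hat{\vect{n}}(t)$ and $\sigma(t)$ for $t\in[a,b]$ by transporting the plane rigidly with the rigid motion of (say) body $A$, and express each vertex $\vect{v}(t)$ of either body in that moving frame. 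The signed gap function $g_{\vect{v}}(t)\equiv\tr{\vect{v}(t)}\hat{\vect{n}}(t)-\sigma(t)$ is then continuous in $t$ because rigid motions are continuous.

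The core step is a continuity/sign argument. For each vertex $\vect{v}$ of $A$, $g_{\vect{v}}(a)\le 0$, and by hypothesis $\vect{v}$ does not pass through the moving plane on $[a,b]$, i.e., $g_{\vect{v}}$ has no interior sign-changing zero. By the intermediate value theorem applied to the continuous function $g_{\vect{v}}$, it cannot become strictly positive on $[a,b]$ without crossing zero, so $g_{\vect{v}}(t)\le 0$ for every $t\in[a,b]$. The symmetric statement holds for every vertex of $B$. Combining these with the convex-hull fact above, $A(t)$ lies in the closed halfspace $\{\vect{x}:\tr{\vect{x}}\hat{\vect{n}}(t)\le\sigma(t)\}$ and $B(t)$ lies in the complementary closed halfspace for all $t\in[a,b]$; hence $\operatorname{int}(A(t))\cap\operatorname{int}(B(t))=\emptyset$, which is exactly the definition of no interpenetration.

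The main obstacle I anticipate is the bookkeeping around the precise meaning of ``passes through,'' since touching the plane (a vertex on the contact manifold with $g_{\vect{v}}=0$) must be permitted without being counted as a crossing; I would address this by stating the hypothesis as $g_{\vect{v}}(t)\le 0$ (resp.\ $\ge 0$) throughout $[a,b]$ rather than as strict non-vanishing, which is the intended reading in light of the preceding Section's treatment of vertices already on the separating plane. A secondary subtlety is that the separating plane is not unique in the degenerate touching cases noted just before the theorem; I would simply appeal to the earlier discussion to restrict attention to the non-degenerate configurations in which the plane at $t=a$ is well-defined, so that the moving plane used in the hypothesis is itself unambiguous.
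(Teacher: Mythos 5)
Your proposal is correct and takes essentially the same route as the paper, whose entire proof is the single line ``Follows directly from the Separating Axis Theorem'': your convex-hull-of-vertices fact plus the continuity/intermediate-value argument is precisely the content of that citation, spelled out. The only cosmetic mismatch is that by rigidly attaching the plane to body $A$ you make the hypothesis vacuous for $A$'s vertices, whereas the theorem phrases the condition symmetrically in terms of the plane moving with the \emph{relative} motion; this is a formalization choice, not a gap.
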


\begin{proof}
Follows directly from the Separating Axis Theorem.
\end{proof}

\begin{theorem}
A feature from polyhedron $A$ that lies strictly above/below the separating plane at time $t_0$ must lie strictly above/below the separating plane during the half-closed, ``safe'' interval $[t_0, t_0 + \Delta t^*)$.
\end{theorem}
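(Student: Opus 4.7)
The plan is to bound the rate at which the signed distance from $\vect{r}$ to the (moving) separating plane can change, then integrate that bound over $[t_0,t_0+\Delta t^*)$. Since $\Delta t^*$ is the minimum of $\Delta t^{\textrm{max}^{\vect{r}}}$ over every off-plane vertex of $A$ and $B$, it suffices to show that any single off-plane vertex $\vect{r}$ cannot close its initial signed distance $|\tr{\hat{\vect{n}}}\vect{r}-\sigma|$ to the plane in any time shorter than $\Delta t^{\textrm{max}^{\vect{r}}}$.

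First I would fix $\vect{r}$, WLOG a vertex of $A$, and let $\vect{\xi}$ be the material point of $A$ coinciding at $t_0$ with the farthest point of $A$'s contact manifold from $\vect{r}$. Under the zero-projected-relative-velocity assumption at the contact manifold, I may choose a parameterization $(\hat{\vect{n}}(t),\sigma(t))$ of the separating plane for which $\vect{\xi}(t)$ stays on the plane throughout the interval, giving the compact expression $d(t)=\tr{\hat{\vect{n}}(t)}(\vect{r}(t)-\vect{\xi}(t))$ with $d(t_0)=\tr{\hat{\vect{n}}}\vect{r}-\sigma$.

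Next I would differentiate and decompose:
$$
\dot d(t)=\tr{\dot{\hat{\vect{n}}}}(\vect{r}-\vect{\xi})+\tr{\hat{\vect{n}}}(\vect{v}_{\vect{r}}-\vect{v}_{\vect{\xi}}).
$$
Because $\vect{r}$ and $\vect{\xi}$ are material points of the rigid body $A$, the second term equals $\tr{\hat{\vect{n}}}(\vect{\omega}_A\times(\vect{r}-\vect{\xi}))$, whose magnitude the scalar triple product identity bounds by $||\vect{\omega}_A\times\hat{\vect{n}}||\cdot||\vect{r}-\vect{\xi}||$. For the first term I would write $\dot{\hat{\vect{n}}}=\vect{\omega}_P\times\hat{\vect{n}}$ with $\vect{\omega}_P$ the angular velocity of the plane's normal, and argue by cases over the three unique configurations (vertex-face, edge-face, coplanar faces) that $\vect{\omega}_P\times\hat{\vect{n}}$ equals either $\vect{\omega}_A\times\hat{\vect{n}}$ or $\vect{\omega}_B\times\hat{\vect{n}}$ (with cancellation against the second term when the plane is rigidly attached to $A$). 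Summing yields
$$
|\dot d(t)|\le(||\vect{\omega}_A\times\hat{\vect{n}}||+||\vect{\omega}_B\times\hat{\vect{n}}||)\,||\vect{r}-\vect{\xi}||,
$$
which when integrated over $[t_0,t)$ gives $|d(t)-d(t_0)|<|d(t_0)|$ for every $t<t_0+\Delta t^{\textrm{max}^{\vect{r}}}$. Hence $d(t)$ never vanishes in the half-open interval, so $\vect{r}$ never crosses the plane.

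The main obstacle I expect is the case analysis establishing $||\vect{\omega}_P\times\hat{\vect{n}}||\le||\vect{\omega}_A\times\hat{\vect{n}}||+||\vect{\omega}_B\times\hat{\vect{n}}||$, together with the accompanying argument that the parameterization $(\hat{\vect{n}}(t),\sigma(t))$ remains consistent with the kinematic assumption throughout $[t_0,t_0+\Delta t^*)$ and not only at $t_0$. The excluded non-unique configurations are precisely those in which $\vect{\omega}_P$ would be genuinely ambiguous, so restricting to the unique cases and invoking the triangle inequality should suffice, but the face-face case deserves particular care since rotational consistency on the contact manifold constrains how $\vect{\omega}_A$ and $\vect{\omega}_B$ may differ.
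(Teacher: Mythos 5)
Your proposal reaches the right conclusion and shares the paper's overall skeleton---lower-bound the distance in the numerator, upper-bound the closing speed in the denominator, conclude the ratio lower-bounds the first crossing time---but it is substantially more of a proof than the paper gives. The paper's argument is essentially one sentence of bookkeeping: it \emph{asserts} that the denominator ``bounds the motion of the feature toward the separating plane'' and then observes that $\frac{e+\delta^e}{f-\delta^f} \ge \frac{e}{f} \ge \Delta t^*$ for $\delta^e \ge 0$, $0 \le \delta^f < f$. You instead derive that assertion: you fix $\vect{\xi}$ as a material point pinned to the plane, write $d(t) = \tr{\hat{\vect{n}}(t)}(\vect{r}(t)-\vect{\xi}(t))$, and split $\dot{d}$ into a plane-rotation term $\tr{\dot{\hat{\vect{n}}}}(\vect{r}-\vect{\xi})$ and a rigid-body term $\tr{\hat{\vect{n}}}(\vect{\omega}_A\times(\vect{r}-\vect{\xi}))$, bounding each by Cauchy--Schwarz. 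That decomposition is exactly the content the paper omits, and it also explains \emph{why} no translational term appears in the denominator of Equation~\ref{eqn:constrained-CA} (it cancels because $\vect{r}$ and $\vect{\xi}$ belong to the same rigid body and $\vect{\xi}$ is assumed to stay on the plane). What the paper's terser route buys is immunity from the one place where your argument is still soft, which you correctly flag yourself: the bound $|\dot{d}(t)| \le (||\vect{\omega}_A\times\hat{\vect{n}}|| + ||\vect{\omega}_B\times\hat{\vect{n}}||)\,||\vect{r}-\vect{\xi}||$ is established at $t_0$, but over the interval $\hat{\vect{n}}(t)$ rotates (with whichever body owns the defining face), so $||\vect{\omega}_A\times\hat{\vect{n}}(t)||$ can grow toward $||\vect{\omega}_A||$ even under the first-order (constant-velocity) discretization; making the integration step of your argument airtight requires either replacing the cross-product norms by $||\vect{\omega}||$ or arguing that the growth is second order in $t - t_0$. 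This is really a gap in the formula's stated denominator rather than in your reasoning, and your version has the virtue of exposing it, whereas the paper's proof hides it inside the unproven claim that $f$ is an upper limit.
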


\begin{proof}
The numerator of $\Delta t^*$ is comprised from the minimum distance between any feature on the body and the separating plane: if an oracle were to produce the ``true'' value of the numerator ($e$, hereafter), that value would be at least as large.  On the other hand, the denominator ($f$, hereafter)---which bounds the motion of the feature toward the separating plane---provides an upper limit. For any $0 \le \delta^e$, $0 \le \delta^f < f$ yielding the time of contact $t_0 + \frac{e+\delta^e}{f-\delta_f}$, it is clear that $\Delta t^* \le \frac{e+\delta^e}{f-\delta_f}$.
\end{proof}

\subsection{Providing a lower bound on the time vertices remain on the separating plane}
$\Delta t^*$ bounds when vertices not already on the separating plane might contact it. For those vertices already on the separating plane, we must examine the velocities projected onto the separating plane at $t_0$ to obtain $\Delta t^\dag$, a lower bound on the time that vertices remain on the (possibly moving) separating plane. $\dot{\phi}^{\vect{p}}$ will hereafter be denoted the relative velocity of the two bodies at point $\vect{p}$ projected onto the separating plane at $t_0$.

\begin{figure}[htbp]
\centering
\includegraphics[width=\linewidth]{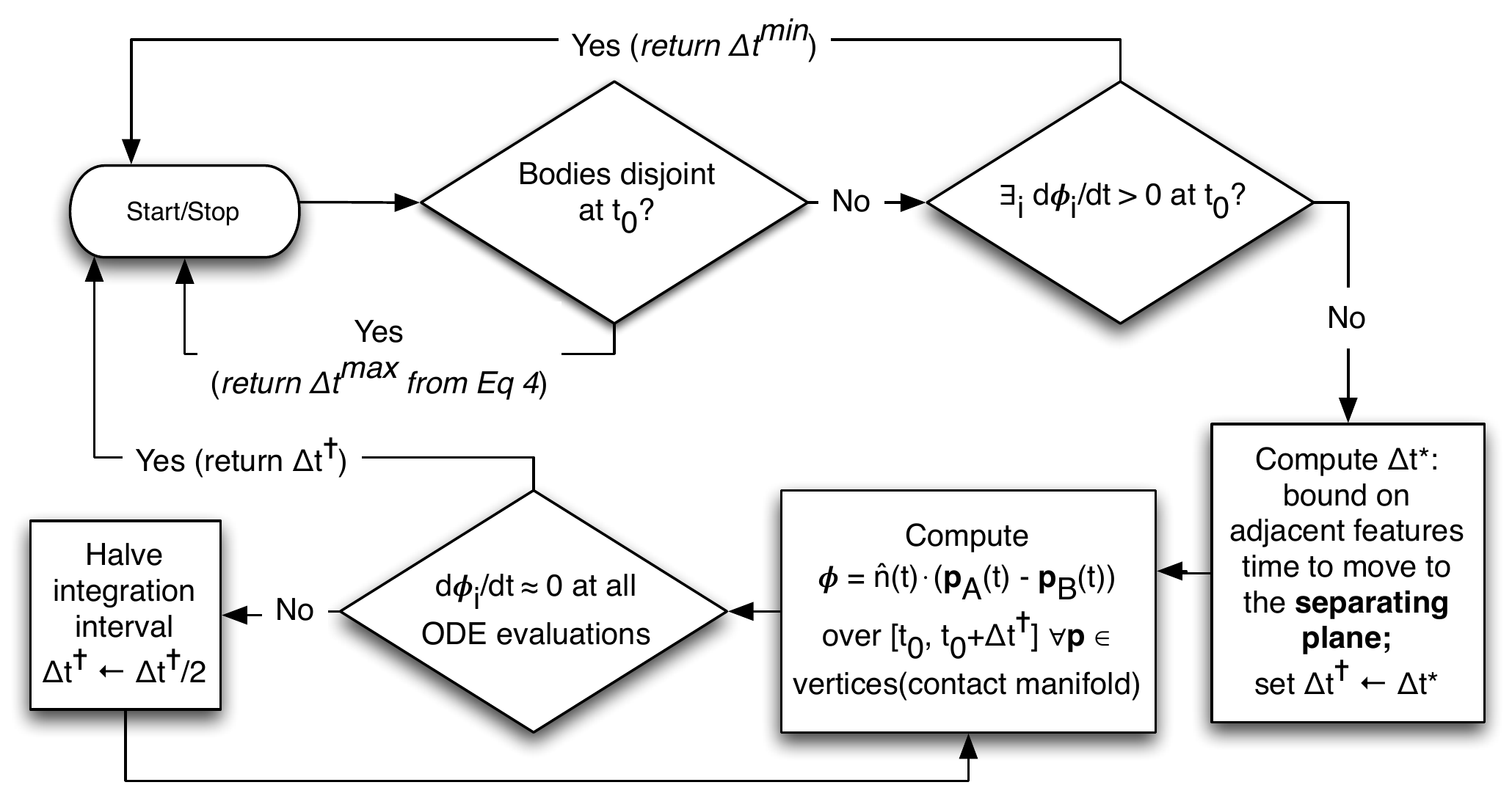}
\caption{Flowchart for determining $\Delta t = r(\vect{q}, \vect{v})$ (see Algorithm~\ref{alg:ts}).}
\label{fig:process}
\end{figure}

The formal definition of $\dot{\phi}^{\vect{p}}$ arises from the distance between a shared point on Body $A$ and Body $B$, projected along the normal to the separating plane $\hat{\vect{n}}$:
\begin{align}
\phi^{\vect{p}} \equiv \tr{\hat{\vect{n}}} (\vect{p}_A - \vect{p}_B)
\end{align}
Constraint solvers (e.g., LCP codes) can ensure that the time derivative of this equation is non-negative at time $t_0$ (denoted henceforth as the time of contact)\footnote{Typical time stepping equations can be used for this purpose (see, e.g.,~\cite{Anitescu:1997}) as long as zero is substituted for $h$.}:
\begin{align}
\dot{\phi^{\vect{p}}} = \tr{\hat{\vect{n}}} (\dot{\vect{p}}_A - \dot{\vect{p}}_B) + \tr{\dot{\hat{\vect{n}}}} (\vect{p}_A - \vect{p}_B)
\end{align}
When the second term above is evaluated at $t_0$, point $\vect{p}$ is shared between the two polyhedra, so $\vect{p}_A - \vect{p}_B = \vect{0}$, i.e., the second term goes to zero.

We now consider two cases: \1 the instantaneous velocities of all vertices projected onto the separating plane are zero and \2 the instantaneous velocity of one or more vertices projected onto the separating plane is non-zero. In the latter case, the multi-bodies must be integrated by some small step, $\Delta t^{\textrm{min}}$, until those active constraints (for the vertices with non-zero project velocity) become inactive as the distances between those vertices and the separating plane become non-zero (refer to Equation~\ref{eqn:DAE1}).  In the former case, we seek the first time (if any) that the time derivative of the signed distance of a vertex from the separating plane becomes non-zero (i.e., $\dot{\phi^{\vect{p}}}(t) \neq 0$).

$\phi^{\vect{p}}$ can be integrated over an interval \emph{free of changes to the active constraint set} using the ODE above (which requires integrating the equations of motion for Bodies $A$ and $B$). Given that the active constraint set remains constant over the interval $[a, b]$, variable step ODE integrators can compute $\phi^{\vect{p}}$ over $[a, b]$ such that desired error tolerances are met. \emph{If all evaluations of $\dot{\phi}^{\vect{p}}$ that are used to compute $\phi^{\vect{p}}$ are approximately zero, then $\phi^{\vect{p}}(t) \approx 0,  \forall t \in [a, b]$}, subject to the specified error tolerances. If not all evaluations are zero, the active constraint set is going to change, so the step size is halved until all evaluations are zero. If all evaluations are zero but the error tolerances are not satisfied, the error estimates are used to select a new integration step (as described in~\cite{Hairer:2008}, Ch. II.4) to meet the desired tolerances. $\Delta t^\dag$ is determined as depicted in Figure~\ref{fig:process}. 

\begin{theorem}
The contact manifold cannot change in the half-open interval\\ \mbox{$[t_0, t_0 + \min{(\Delta t^*, \Delta t^\dag)})$}.
\end{theorem}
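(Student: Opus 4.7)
The plan is to decompose the event ``the contact manifold changes in $[t_0, t_0+T)$'' into two mutually exhaustive sub-events and show that each is individually ruled out by one of the two quantities whose minimum defines $T$. First, I would observe that for two convex polyhedra the contact manifold is fully determined by the set of vertices of each body that lie on the common separating plane, together with the induced coplanar convex hulls. Consequently, any change in the manifold must be accompanied either by (a) a vertex strictly off the plane at $t_0$ arriving on the plane, or (b) a vertex on the plane at $t_0$ leaving the plane.

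Next, I would dispatch case (a) by direct appeal to the preceding theorem (``a feature strictly above/below the separating plane at $t_0$ must remain strictly above/below during $[t_0, t_0+\Delta t^*)$''); hence no new vertex contacts are created on that subinterval. Case (b) I would handle via the construction of $\Delta t^\dag$: by the definition given above, $\Delta t^\dag$ is chosen so that $\dot{\phi}^{\vect{p}}$ is approximately zero at every integrator evaluation over $[t_0, t_0+\Delta t^\dag)$ for each on-plane vertex $\vect{p}$, subject to the error tolerance; integrating then gives $\phi^{\vect{p}}(t) = \phi^{\vect{p}}(t_0) = 0$ throughout, so every on-plane vertex remains on the plane. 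Intersecting the two safe subintervals yields $[t_0, t_0+\min(\Delta t^*, \Delta t^\dag))$, on which neither (a) nor (b) can occur, so the contact manifold is invariant.

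The main obstacle I anticipate is rigorously justifying the vertex-level characterization used in the first step---specifically, that an edge or face cannot become tangent to (or detach from) the moving separating plane except by one of its defining vertices first touching (or leaving) the plane. This should reduce to two observations: by convexity, the signed distance from any point of an edge or face to the plane is a convex combination of the signed distances of that feature's vertices, so a sign change must occur first at a vertex; and by the Separating Axis Theorem (Theorem 1), as long as no vertex crosses the plane the candidate plane continues to separate the two bodies, precluding any spurious contacts elsewhere on their boundaries. Establishing these two observations cleanly, and then handing off to the already-proved bounds on $\Delta t^*$ and $\Delta t^\dag$, would complete the argument.
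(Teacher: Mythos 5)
Your proposal takes essentially the same approach as the paper: the paper's proof likewise reduces any change of the contact manifold to a vertex either newly touching the separating plane (ruled out by $\Delta t^*$) or leaving it (ruled out by $\Delta t^\dag$), and concludes by taking the minimum. Your additional convexity argument for why edges and faces cannot gain or lose tangency before one of their vertices does is a detail the paper simply asserts, so your version is, if anything, slightly more complete.
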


\begin{proof}
This proof relies upon the observation that the contact manifold does not change from $t_0$ until a vertex from a polyhedron leaves or touches the separating plane. $\Delta t^*$ bounds the time that no vertex from either polyhedron can newly contact the separating plane, and $\Delta t^\dag$ bounds the time that no vertex can leave the separating plane. 
\end{proof}

\section{Integration approach}

The approach for integrating bodies over the time interval $[a, b]$ uses Mirtich's CA approach to integrate bodies' states to some time $a < t_0 \le b$ such that the minimum distance between the bodies at time $t_0$ is $\epsilon \ll 1$ (with $\epsilon$ possibly equal to zero). The process described in the previous section takes over at that point, and ensures that ensuing steps are sufficiently small to detect changes to the contact manifold.

\subsection{Conservative advancement and integration}
Algorithm~\ref{alg:ts} describes a first-order integrator. CA can function with higher order integration schemes~\cite{Mirtich:1996vt}, though computational efficiency is reduced (the integrator may have to re-integrate the same time interval repeatedly). 

\begin{algorithm}[htpb]
\caption{\label{alg:ts} \textsc{Simulate}($t_i, \Delta t$) Simulates a system of multi-rigid bodies from time $t_i$ to time $t_f \le t_i + \Delta t$ using a step size such that contact events are not missed, to the accuracy of the first order integrator. This algorithm assumes that all bodies are at admissible configurations at $\{ t_i, \vect{q}_i, \vect{v}_i \}$; if the initial state of the simulator is feasible, \textsc{Simulate} will keep the constraints feasible thereafter. }
\begin{algorithmic}[1]
\State $h \leftarrow \min{(r(\vect{q}_{i}, \vect{v}_i), \Delta t)}$
\Comment Compute conservative step (see Figure~\ref{fig:process})
\State $\vect{q}_{i+1} \leftarrow \vect{q}_i + h\vect{v}_i$ \Comment Integrate position forward
\State $\vect{v}^*_{i+1} \leftarrow \vect{v}^*_i + h\vect{f}(\vect{q}_{i+1},\vect{v}_i)$ \Comment Integrate velocity forward 
\State $\vect{v}_{i+1} \leftarrow \vect{k}(\vect{v}_{i+1}^*)$ \Comment Determine contacts (if any) and apply constraint impulses
\State \textbf{return} $\{ t_i + h, \vect{q}_{i+1}, \vect{v}_{i+1} \}$
\end{algorithmic}
\end{algorithm}

By using Equation~\ref{eqn:CA} to select the integration step, denoted as $h$, bodies will not be interpenetrating (excepting small, floating point rounding error) at the end of the step. The function $\vect{f}(.)$ on Line~3 is defined as the ordinary differential equation for the equations of motion, which may be defined in minimal or absolute coordinates; the constraint equations are not considered at this point.  Rather, bilateral and unilateral constraint forces are computed on Line~4. The constraint model from~\cite{Drumwright:2015} is applied in the experiments in Section~\ref{section:experiments}, though the models from~\cite{Stewart:1996,Anitescu:1997} are usable as well. 

\begin{theorem}
A positive integration step is always possible.
\end{theorem}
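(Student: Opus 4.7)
The plan is to show that the function $r(\vect{q}_i, \vect{v}_i)$ on Line~1 of Algorithm~\ref{alg:ts} always returns a strictly positive value, so that $h = \min(r, \Delta t) > 0$ whenever the caller supplies $\Delta t > 0$. Following the flowchart in Figure~\ref{fig:process}, I would case-split on the state of the contact manifold at $t_i$. If the two polyhedra are strictly separated so that $\|\vect{d}\| > 0$, then Equation~\ref{eqn:CA} directly yields a positive $\Delta t^{\textrm{max}}$: the numerator is strictly positive by assumption, and the denominator is finite under the standing hypothesis that linear and angular velocities are bounded.

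Otherwise the bodies are touching, and $r$ returns $\min(\Delta t^*, \Delta t^\dag)$. For $\Delta t^*$, I would invoke Equation~\ref{eqn:constrained-CA} together with the side condition $|\tr{\hat{\vect{n}}}\vect{r} - \sigma| > 0$: the minimum is taken only over vertices that are strictly off the separating plane, and the denominator is again finite, so each term in the minimum is strictly positive (the edge case in which every vertex already lies on the separating plane is degenerate and yields $\Delta t^* = +\infty$, deferring positivity entirely to $\Delta t^\dag$). For $\Delta t^\dag$, I would use the fact that the constraint impulse step on Line~4 enforces $\dot{\phi}^{\vect{p}} \geq 0$ at every contacting vertex $\vect{p}$. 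If some $\dot{\phi}^{\vect{p}} > 0$, the vertex is strictly separating from the plane and the left branch of Figure~\ref{fig:process} takes the fixed positive step $\Delta t^{\textrm{min}}$. If every $\dot{\phi}^{\vect{p}} = 0$, I would fall into the right branch and argue that the integrator returns a positive step (see below).

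The main obstacle is this last sub-case, because ``positive step'' there depends on the behavior of the variable-step ODE integrator invoked inside the loop of Figure~\ref{fig:process}. I would argue as follows: between events the right-hand side $\vect{f}(\vect{q},\dot{\vect{q}},\vect{u})$ and the projection $\hat{\vect{n}}$ vary smoothly with state, so $\dot{\phi}^{\vect{p}}$ is a continuous function of time along the trajectory. Starting from $\dot{\phi}^{\vect{p}}(t_0) = 0$, any prescribed error tolerance is met on a sufficiently short interval, so the halving/doubling loop of~\cite{Hairer:2008} terminates with a strictly positive $\Delta t^\dag$. Combining the three cases, $r(\vect{q}_i, \vect{v}_i) > 0$ in every configuration that the algorithm can reach, which gives the theorem.
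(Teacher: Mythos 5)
Your proposal follows essentially the same route as the paper's proof: a three-way case split on disjoint bodies (Equation~\ref{eqn:CA} gives a positive numerator), sustained contact (Equation~\ref{eqn:constrained-CA} only evaluates vertices off the separating plane, so its numerator is positive), and transient contact (a small positive step is returned per Figure~\ref{fig:process}). You are in fact somewhat more careful than the paper on the $\Delta t^\dag$ sub-case, where the paper simply defers to the flowchart while you argue that the variable-step integrator terminates with a positive step by continuity of $\dot{\phi}^{\vect{p}}$ between events.
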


\begin{proof}
If bodies' geometries are disjoint at $t_0$, Equation~\ref{eqn:CA} must return a positive value (infinity if, e.g., the bodies are motionless). If bodies are in sustained contact, Equation~\ref{eqn:constrained-CA} will yield a positive value: the numerator must be positive---non-positive values correspond to vertices that lie on the contact manifold and are not evaluated by this equation---and the denominator must evaluate to a non-negative value. If contact between bodies is transient, a small positive value is returned (see Figure~\ref{fig:process}).
\end{proof}

\begin{figure}[htb]
\centering
\includegraphics[width=.8\linewidth]{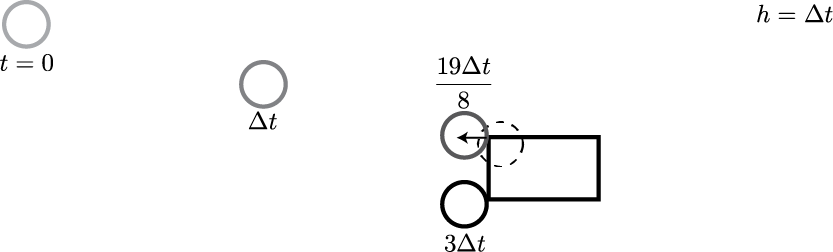}
\caption{The adaptive approach described in this paper using a maximum step size of $h = \Delta t$, applied to the scenario depicted in Figures~\ref{fig:ST-approach} and~\ref{fig:AP-approach}. Adaptive stepping avoids the problems exhibited for time stepping with $h = 3\Delta t/2$ (missing the contact completely) and $h=\{ \Delta t, \Delta t/2, \Delta t/4 \}$ (contact is constrained in the wrong direction); see Figure~\ref{fig:ST-approach}. }
\label{fig:adaptive-approach}
\end{figure}

\subsubsection{Determining contact data}
\label{section:research:determining-contact-data}
When bodies are separated by distance $\epsilon$ or less, contact points and surface normals are determined by examining closest features. For polyhedra ``kissing'' (to floating point tolerances), these features are vertices, edges, and faces, and a pair of closest features might form a contact point, edge, or surface. The intersection of two convex polytopes results in a convex polytope of $O(n+m)$ vertices. Thus, solving LCPs corresponding to contact models on these inputs will exhibit running time $O(n^3)$, assuming $O(n)$ pivots for Lemke's Algorithm~\cite{Cottle:1992}. Convexity implies that velocities that satisfy the non-interpenetration constraints at these $O(n+m)$ vertices also satisfy non-interpenetration constraints anywhere on the convex contact surface. Thus as long as the constraint algorithm is guaranteed to find a solution for forces at the $O(n+m)$ contact points---approaches in~\cite{Stewart:1996,Anitescu:1997,Drumwright:2010b}, for example, provide such guarantees---two polyhedra will remain disjoint through  some time $t > t_0$. Space limitations prevent further discussion of contact determination.

\section{Conservative advancement for multibodies}
\label{section:multibody-CA}
Determining the distance a link on a multibody can move in unit time along a vector $\hat{\vect{d}}$ requires special consideration. 
If Body $A$ is the $i^{\textrm{th}}$ link in an articulated body chain (and attached to its parent by joint $i$), the terms that include Body $A$ in the denominator of Equation~\ref{eqn:CA} would change to:
\begin{align}
|\tr{\dot{\vect{x}}}\hat{\vect{d}}| + ||\vect{\omega}|| \gamma_1 + \sum_{j=1}^i \kappa_j\label{eqn:CA-multibody}
\end{align}
where
\begin{align}
\kappa_j & = \begin{cases} 
|\dot{q}_j|& \textrm{ if joint $j$ prismatic} \\
||\dot{q}_j|| \gamma_j & \textrm{ if joint $j$ revolute, spherical, or universal}
\end{cases} \\
\gamma_j & \equiv \sum_{k=j}^{i-1} d^\ell_k + 2r^A \label{eqn:gamma} \\
d^\ell_k &= \begin{cases} 
|q_k| + |\dot{q}_k|  & \textrm{ if joint $k$ prismatic} \\
||\vect{\ell}_{k+1} - \vect{\ell}_{k}|| & \textrm{ if joint $k$ revolute, spherical, or universal}
\end{cases}
\label{eqn:d}
\end{align}
where $\dot{\vect{x}}$ and $\vect{\omega}$ are the linear and angular velocities of the robot's base (which will be zero if the base is affixed to the environment) and $\vect{\ell}_j$ is the location of the $j^{\textrm{th}}$ joint (in the global frame) when all prismatic joints are configured to yield the minimum Euclidean distance between successive joints. We determined these formulas by analyzing the illustration in Figure~\ref{fig:CA} (R); they could also be derived by considering the possible Jacobian matrices with respect to \emph{any point} on a link with spherical geometry; space limitations preclude a full derivation. We have tested that the formulas provide correct bounds on multi-bodies with both revolute and prismatic joints.

Similarly, the equation below replaces Equation~\ref{eqn:constrained-CA} (i.e., the constrained conservative advancement formula) when Body A is a link in a multibody:
\begin{align}
\Delta t^{\textrm{max}^{\vect{r}}} \equiv \frac{|\tr{\hat{\vect{n}}}\vect{r} - \sigma|}{(||\vect{\omega}_A|| + \sum_{j=1}^i ||\dot{\vect{q}}_j|| + ||\vect{\omega}_B \times \hat{\vect{n}}||) ||\vect{r} - \vect{\xi}||}
\end{align}

\begin{figure}[htbp]
\centering
\includegraphics[width=.295\linewidth]{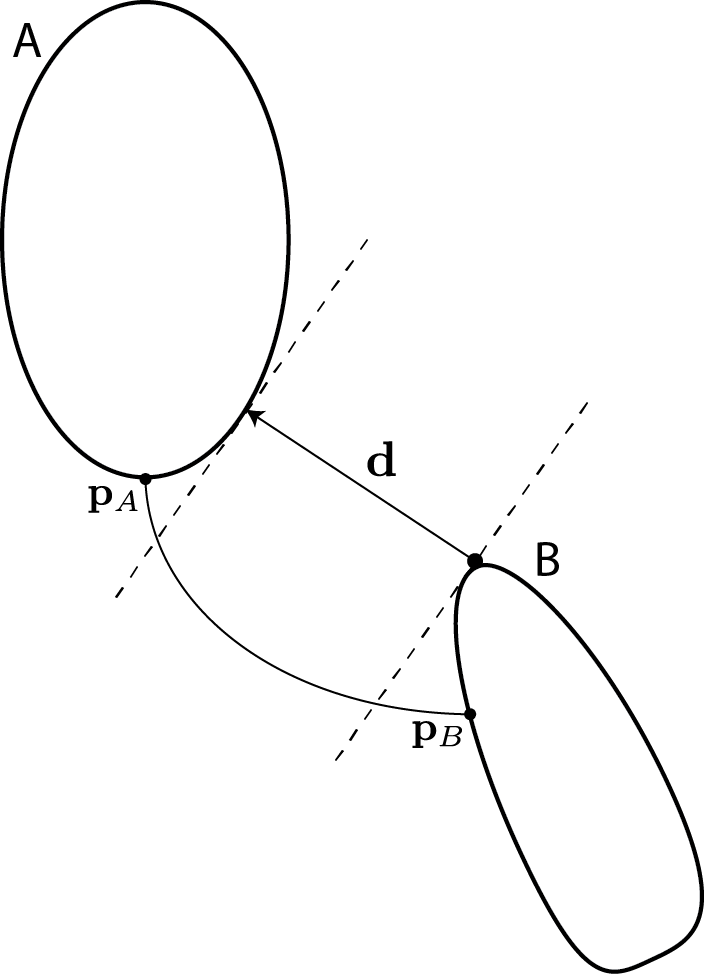} \hspace{.25in}
\includegraphics[width=.6\linewidth]{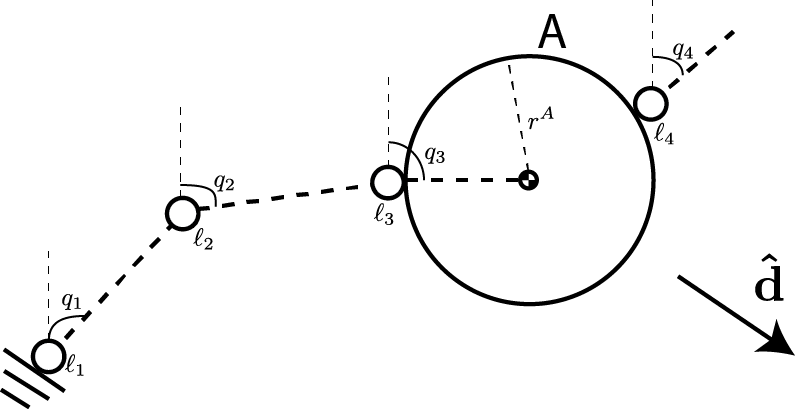}\caption{(Left) Conservative advancement (CA) uses the distance between two rigid bodies to bound the motion between bodies over a time interval. For points $\vect{p}_A$ and $\vect{p}_B$ to collide, they must cover at least distance $||\vect{d}||$ along direction $\vect{d}$.   (Right) Our extension of CA to a multi-body uses Equations~\ref{eqn:CA-multibody}--\ref{eqn:d} to bound the movement of Body A (Link $i$) over a unit time interval.
\label{fig:CA}}
\end{figure}

\section{Experiments}
\label{section:experiments}
We have implemented this research in the open source library, \software{Moby}.  We use a first-order, velocity-level approach with a no-slip contact model (described in~\cite{Drumwright:2015}) to obviate the issue of indeterminate contact configurations. Constraint stabilization---which is generally used by simulators to correct interpenetration, albeit not without introducing new problems~\cite{Ascher:1995a,Drumwright:2016}---is disabled in these experiments to gauge constraint violation in its absence. Experimental code is available for review at \url{https://github.com/PositronicsLab/wafr16-experiments}.

\subsection{Rotating box experiment}
We constructed the scenario of a die thrown onto a planar surface to 
assess the efficiency of detecting changes to the contact manifold. The initial position, orientation, and velocity of the die was set to an arbitrary (randomly selected) value, and the simulation was run until the die became motionless. The simulation detected 102 changes to the contact manifold for this scenario; examination indicates that the large number of changes was due to chattering. The minimum distance observed between the die and the half-space was $1.0 \times 10^{-8}$~m (i.e., there was no interpenetration). The mean time between contact manifold changes was $5.3 \times 10^{-3}$~s, while the mean integration step was $2.7 \times 10^{-5}$~s. The minimum time between contact manifold changes was $1.0 \times 10^{-6}$~s. Therefore, the adaptive simulation technique took, on average, a step 27 times larger than the minimum necessary to detect all contact manifold changes and a mean step 196 times smaller than the largest mean step for detecting contact manifold changes. 

\subsection{Chain experiment}
\label{section:experiment:chain}
Another experiment was run for which it was known that there would be zero contact manifold changes because contact never occurs. The experiment uses a $100$ m chain suspended $100.0001$ m above the ``ground'', extended $100$ m horizontally, and then dropped; the chain is essentially a 100-link pendulum. The simulation was run until immediately after the chain skims the ground. When the chain reaches this nadir, the tip is moving at high speed, stressing the multi-body conservative advancement described in Section~\ref{section:multibody-CA}. The smallest integration step recorded is $6.17 \times 10^{-4}$ s, and the mean integration step is $0.0065$ s. The nominal integration step is $0.01$: the adaptive process uses a step  $\frac{1}{2}$ times smaller than the optimum step for detecting contact manifold changes. Figure~\ref{fig:hist} shows a histogram of integration step sizes.

\subsection{Quadrupedal trotting experiment}
We used the scenario of a quadrupedal robot trotting on a halfspace to assess the efficiency of detecting changes to the contact manifold for a multi-body undergoing sustained contact. The 18 DoF robot model has been used in prior work (see, e.g., \cite{Zapolsky:2014}), and uses spherical polyhedron feet with 1,000 vertices each. Gait generation and control is provided by \software{Pacer}~\cite{Zapolsky:2015a}. The robot was simulated trotting for a single gait cycle. The simulation detected 317 changes to the contact manifold for this scenario. The mean time between contact manifold changes was $1.5 \times 10^{-3}$~s, while the mean integration step was $7.2 \times 10^{-6}$~s. The minimum time between contact manifold changes was $1.3 \times 10^{-7}$~s. Therefore, the adaptive simulation technique took, on average, a step size $55$ times larger than the minimum necessary to detect all contact manifold changes and a mean step $213$ times smaller than the largest mean step for detecting contact manifold changes. 

\section{Discussion}

This work aims to provide the correct result, to a desired tolerance, to the mathematical equations for rigid and multi-rigid bodies with convex polytopic geometries. Whether such rigidity is physically accurate or computationally efficient are inapposite, albeit interesting, questions. 

Two pieces of work remain to achieve this goal for general multi-body simulations. First, all events that correspond to \1 hitting joint limits and to \2 transitioning from sticking to sliding (and vice versa) for dry contact friction must be located. Locating both types of events superficially appears much easier than locating contact events. The second task remaining is that of incorporating the integration error estimates, produced through adaptive integration, into the conservative advancement process. 

Finally, a caveat remains for users that expect a solution to always be  obtainable. Indeterminacy in the rigid contact model with Coulomb friction means that a single solution (i.e., a single trajectory) may not exist; instead, the solutions to some problems may be characterized by ``trees'', some of which may possess an infinite branching factor. 

\section*{Acknowledgements}
The author thanks Russ Tedrake and Michael Sherman for excellent feedback. This work was funded by ARO grant W911NF-16-1-0118.

\bibliographystyle{abbrv}
\bibliography{paper}

\end{document}